\theoremstyle{plain}
\newtheorem{theorem}{Theorem}[section]
\newtheorem{lemma}[theorem]{Lemma}
\theoremstyle{definition}
\newtheorem{definition}[theorem]{Definition}
\theoremstyle{remark}
\icmltitlerunning{CoMemo: LVLMs Need Image Context with Image Memory}
\newcommand{\finding}[2]{
\vspace{-0.1cm}
\begin{tcolorbox}[
    colback=white!90!gray,     
    colframe=teal!60!black,     
    arc=5pt,                    
    boxsep=5pt,                 
    left=10pt,                  
    right=10pt,                 
    top=2pt,                    
    bottom=2pt,                 
    boxrule=0.8pt,              
    drop shadow=gray!50!white,  
    enhanced jigsaw,             
]
\vspace{-0.1cm}
    \paragraph{\textbf{\textit{Finding #1:}}} #2
\vspace{-0.1cm}
\end{tcolorbox}
\vspace{-0.3cm}
}
\begin{document}

\twocolumn[
\icmltitle{CoMemo: LVLMs Need Image Context with Image Memory}



\icmlsetsymbol{equal}{*}
\icmlsetsymbol{corre}{${\textrm{\Letter}}$}
\icmlsetsymbol{lead}{${\dagger}$}

\begin{icmlauthorlist}
\icmlauthor{Shi Liu}{equal,lab}
\icmlauthor{Weijie Su}{equal,lead,corre,lab}
\icmlauthor{Xizhou Zhu}{thu,lab}
\icmlauthor{Wenhai Wang}{cuhk,lab}
\icmlauthor{Jifeng Dai}{thu,lab}
\end{icmlauthorlist}

\icmlaffiliation{lab}{Shanghai Artificial Intelligence Laboratory}
\icmlaffiliation{thu}{Tsinghua University}
\icmlaffiliation{cuhk}{The Chinese University of Hong Kong}

\icmlcorrespondingauthor{Weijie Su}{suweijie@pjlab.org.cn}

\icmlkeywords{Machine Learning, ICML}

\vskip 0.3in
]



\printAffiliationsAndNotice{\icmlEqualContribution $^{\dagger}$Project lead} 

\begin{abstract}
Recent advancements in Large Vision-Language Models built upon Large Language Models have established aligning visual features with LLM representations as the dominant paradigm. 
However, inherited LLM architectural designs introduce suboptimal characteristics for multimodal processing. 
First, LVLMs exhibit a bimodal distribution in attention allocation, leading to the progressive neglect of middle visual content as context expands. 
Second, conventional positional encoding schemes fail to preserve vital 2D structural relationships when processing dynamic high-resolution images. 
To address these limitations, we propose \textbf{CoMemo} - a dual-path architecture that combines a \textbf{Co}ntext image path with an image \textbf{Memo}ry path for visual processing, effectively alleviating visual information neglect. 
Additionally, we introduce RoPE-DHR, a novel positional encoding mechanism that employs thumbnail-based positional aggregation to maintain 2D spatial awareness while mitigating remote decay in extended sequences. 
Evaluations across seven benchmarks,including long-context comprehension, multi-image reasoning, and visual question answering, demonstrate CoMemo's superior performance compared to conventional LVLM architectures.
Project page is available at \href{https://lalbj.github.io/projects/CoMemo/}{https://lalbj.github.io/projects/CoMemo/}.
\end{abstract}

\begin{figure*}[]
  \begin{minipage}[b]{0.28\linewidth} 
    \centering
    \includegraphics[width=\linewidth]{./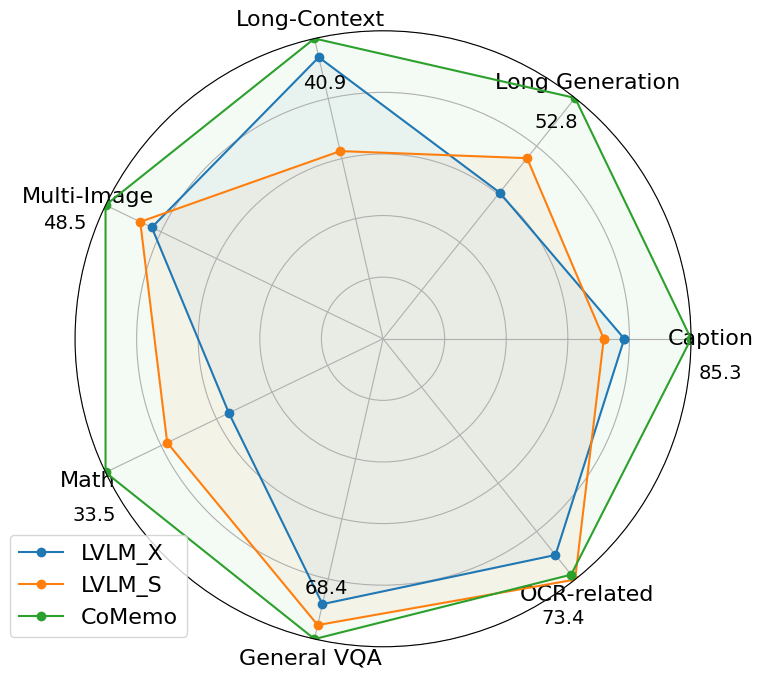}
    \caption{Evaluation results of three architectures with same training data and model size (2B). Please refer to~\cref{tab:gen,tab:multi and long,tab:general} for details.}
    \label{fig:eval_radar}
  \end{minipage}
  \hfill
  \begin{minipage}[b]{0.70\linewidth} 
    \centering
    \begin{subfigure}{0.32\linewidth} 
      \includegraphics[width=\linewidth]{./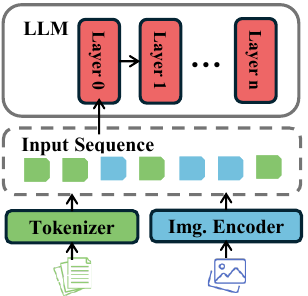}
      \caption{LVLM-S.}
      \label{fig:lvlms}
    \end{subfigure}
    \begin{subfigure}{0.32\linewidth} 
      \includegraphics[width=\linewidth]{./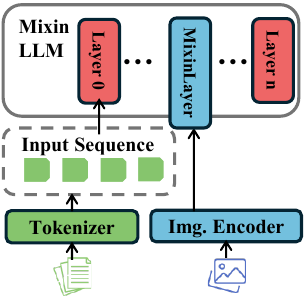}
      \caption{LVLM-X.}
      \label{fig:lvlmx}
    \end{subfigure}
    \begin{subfigure}{0.32\linewidth} 
      \includegraphics[width=\linewidth]{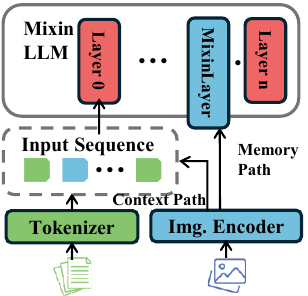}
      \caption{Ours.}
      \label{fig:comemo}
    \end{subfigure}
    \caption{Comparison three types of architectures for LVLMs. Method (a) use image encoder to align visual features with the LLM's continuous token representation space. Method (b) employs mixin layer with cross-attention to update LLM's hidden states based on visual features. And Method (c) contrust a dual-path structure to enable the model to focus more on visual content during generation.} 
    \label{fig:combined_figures}
  \end{minipage}
\end{figure*}
\section{Introduction}\label{sec:intro}

Recent advances in large language models (LLMs) have demonstrated unprecedented generative capabilities~\cite{instructgpt, llama2}, primarily driven by the exponential scaling of training data and model parameters. Building upon this foundation, large vision-language models (LVLMs) have emerged as powerful multimodal systems that align visual representations with LLM embedding spaces to enable cross-modal reasoning~\cite{llava, flamingo}. Current methodologies for visual information injection predominantly follow two architectural paradigms.

The first paradigm (referred to as LVLM-X, e.g., Flamingo~\cite{flamingo}) employs cross-attention mechanisms to integrate visual features into textual representations. While this approach offers flexible modality interaction, recent studies~\cite{idefics2} have revealed its suboptimal performance compared to alternative approaches when using identical LLM backbones and training data - a finding corroborated by our studies in~\cref{fig:eval_radar}.

The second paradigm (referred to as LVLM-S, e.g., LLaVA~\cite{llava}) aligns visual tokens into text token embeddings space and then performs autoregressive processing. This paradigm is more compatible with LLM architectures; however, the preservation intrinsic mechanisms such as attention bimodality~\cite{attn_sink, PAI} and the linearly increasing position encoding leads to critical limitations: (1) the ``lost in the middle''~\cite{lost, milebench} phenomenon degrades performance with increasing context length, and (2) positional encoding sparsity induces remote decay and 2d-dimensional lost in high-resolution image processing.

In this paper, we propose a novel framework for LVLM, named CoMemo. Our key idea is to introduce an additional image-processing mechanism that is unaffected by context, without modifying the internal mechanisms of the LLM. Specifically, we concatenate image tokens with text tokens as the input sequence for fully autoregressive processing, while simultaneously feeding the image tokens into a mixin layer for cross-attention computation. Cross-attention retrieves image information based on text, avoiding the issue of image neglect that can occur in causal self-attention. However, simply combining these two structures, as in NVLM-H~\cite{nvlm}, does not work well. To address this, we first investigated a balanced scheme for visual representation inputs. Then, we introduced a three-stage training technique to prevent overreliance on the cross-attention path, effectively transforming it into a ``memory path.'' Meanwhile, the fully autoregressive process serves as the primary path for introducing image context, referred to as the ``context path.''

The positional encoding scheme in LVLMs typically adopts RoPE from LLMs, treating each image patch token as an individual token for encoding. However, this approach results in highly sparse positional encodings for dynamic high-resolution image patch tokens. Such sparse encoding can lead to remote decay issues in positional encoding, and the one-dimensional incremental encoding scheme also loses the two-dimensional information of the image. To address this, we propose a novel positional encoding scheme based on the dynamic high-resolution method.
Specifically, in the dynamic high-resolution approach, the image is divided into multiple image tiles and a single image thumbnail. We treat the image thumbnail as part of the input sequence for standard sequential positional encoding, while mapping the image tiles to the image thumbnail index based on their two-dimensional positional relationships.

To fully evaluate the impact of the CoMemo architecture, we collected a set of multimodal benchmarks and categorized them into seven evaluation tasks: Caption, Long-generation, Multi-image, Long-context, Math, General VQA, and OCR-related tasks. The results demonstrates CoMemo's superiority over LVLM-X/S baselines under same data and model settings. Our framework achieves 17.2\%, 7.0\% and 5.6\% relative improvement on Caption, Long-Generation and Long-Context tasks respectively, with consistent gains across various benchmarks.

\section{Design Thinking for CoMemo}\label{sec:des}
\subsection{Why LVLMs Tend to ``lose in the middle''?}
\begin{figure}[t!]    
    \centering    
    \includegraphics[width=\linewidth]{./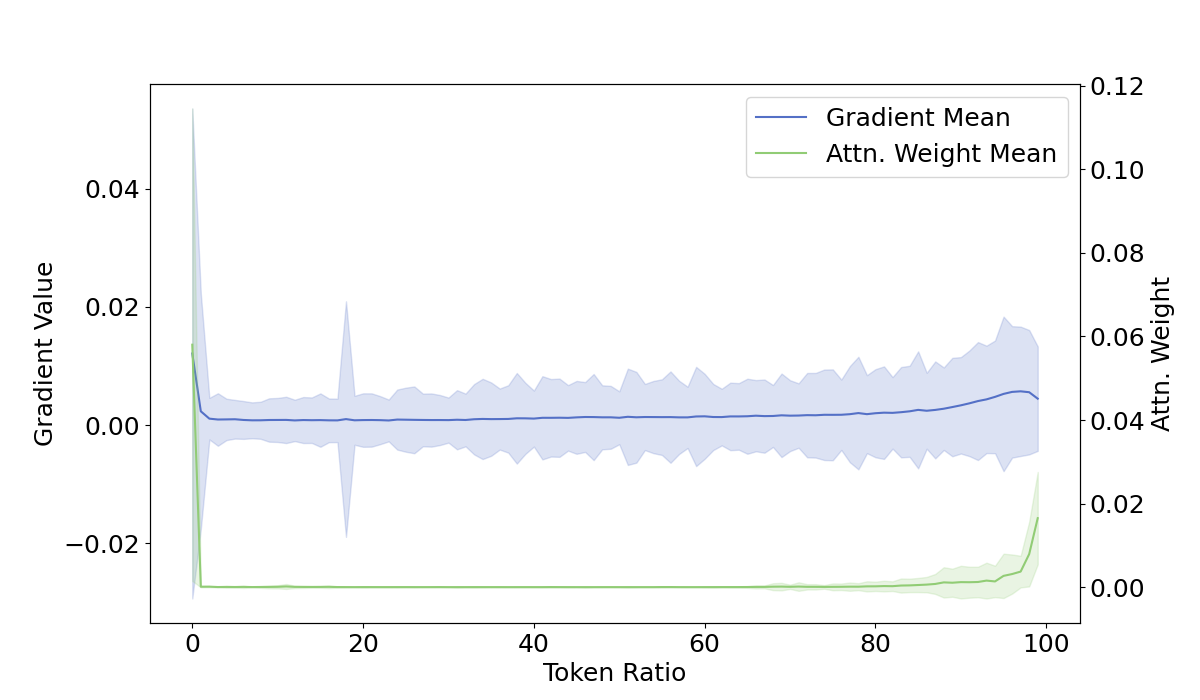}    
    \caption{Average gradients and attention weights assigned to tokens at corresponding positions. We computed the average over 1,000 samples.}    
    \label{fig:grad_attn}    
\end{figure}  
Previous studies have shown that both LLMs and LVLMs exhibit the ``Lost in the middle'' phenomenon~\cite{lost, milebench}, as illustrated in~\cref{fig:lvlmc_needle}. This refers to their struggle to capture key information placed in the middle of the input. However, this phenomenon arises from the causal self-attention and in-context mechanisms inherent in these models. Both LLMs and LVLMs are trained using the next-token prediction paradigm, which heavily relies on contextual tokens during prediction. 

As shown in~\cref{fig:grad_attn}, we plot the gradients of each input token during training and the attention weights assigned to each token during inference for the InternVL2 model. To handle sequences of varying lengths, we map each sequence into 100 bins based on the position of each token. A significant portion of the gradient for the current predicted token is backpropagated to nearby tokens. As a result, models trained in this way tend to allocate most of their attention to adjacent tokens during inference, while the initial token, which has a larger gradient and attention, acts as an attention sink to release redundant attention~\cite{attn_sink}. This results in the effective capture of key information at the beginning of the sequence, as well as nearby tokens benefiting from contextual attention. 

However, key information placed in the middle of the sequence is more likely to be lost. As the context length increases, the middle segment, which is at a higher risk of being lost, also extends.

\finding{1}{The attention distribution mechanism in causal self-attention is the cause of the ``Lost in the middle'' phenomenon. This mechanism also limits the performance of LVLMs in long-context scenarios.}

\begin{figure}[t!]    
    \centering    
    \includegraphics[width=\linewidth]{./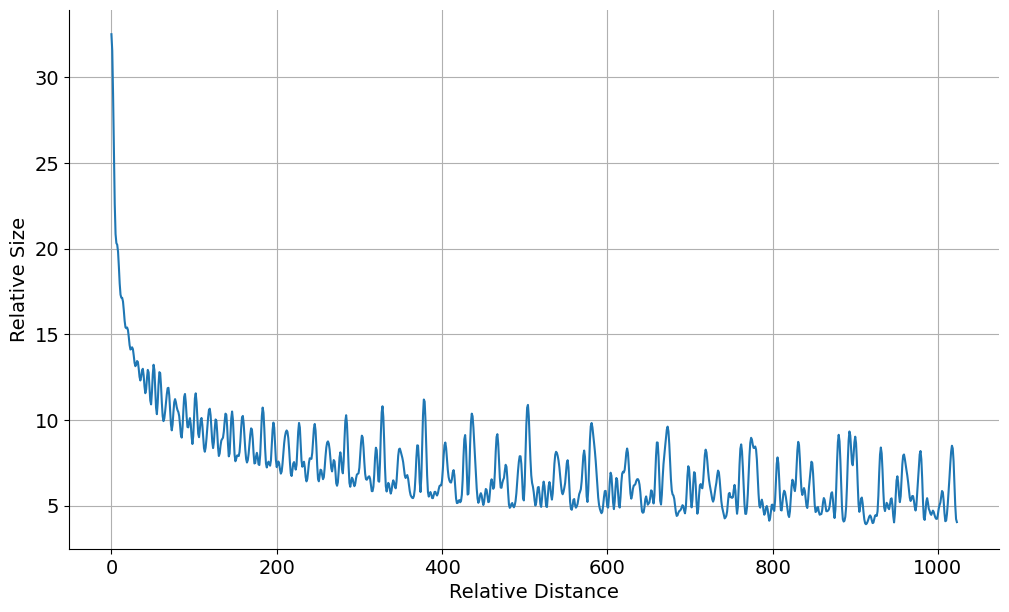}    
    \caption{Remote decay estimation for InternVL2-2B. The relative distance refers to the difference between absolute position IDs. In RoPE, the position ID of each input token increments by 1 with the input sequence.}
    \label{fig:remote_decay}    
\end{figure}   

\subsection{Remote Decay in LVLMs with DHR}
Dynamic high resolution significantly enhances the performance of visual tasks by introducing more visual context, especially in tasks that require high resolution, such as OCR-related evaluations. However, this benefit introduces a critical trade-off: excessive visual context exacerbates the remote decay issue in Rotary Position Embedding (RoPE), ultimately limiting model effectiveness in long-context scenarios.

RoPE implements relative position encoding through absolute encoding mechanisms, formally expressed as:
\begin{equation} 
(\boldsymbol{\mathcal{R}}_m \boldsymbol{q})^{\top}(\boldsymbol{\mathcal{R}}_n \boldsymbol{k}) = \text{Re}\left[\sum_{i=0}^{d/2-1}\boldsymbol{q}_{[2i:2i+1]}\boldsymbol{k}_{[2i:2i+1]}^* e^{\mathrm{i}(m-n)\theta_i}\right],
\end{equation}
where $m,n$ denote token positions, $d$ the embedding dimension, and $\theta_i$ follows the sinusoidal position encoding scheme. This formulation inherently inherits the remote decay characteristics of sinusoidal encodings. 

RoPE exhibits the remote decay property where the relative size between tokens decreases as the relative distance increases~\cite{remote_decay}, as shown in~\cref{fig:remote_decay}. While standard InternVL2 processes 256 image tokens per image, activating DHR with a dynamic number of 6 expands this to 1,792 tokens—a 7$\times$ increase that further reduces the influence of image tokens during generation.

\finding{2}{The context expansion from DHR fundamentally aggravates image neglect.}  

\subsection{The Balance Between tow Pathways}
\label{sec:bal}
\begin{figure}[t!]    
    \centering    
    \includegraphics[width=\linewidth]{./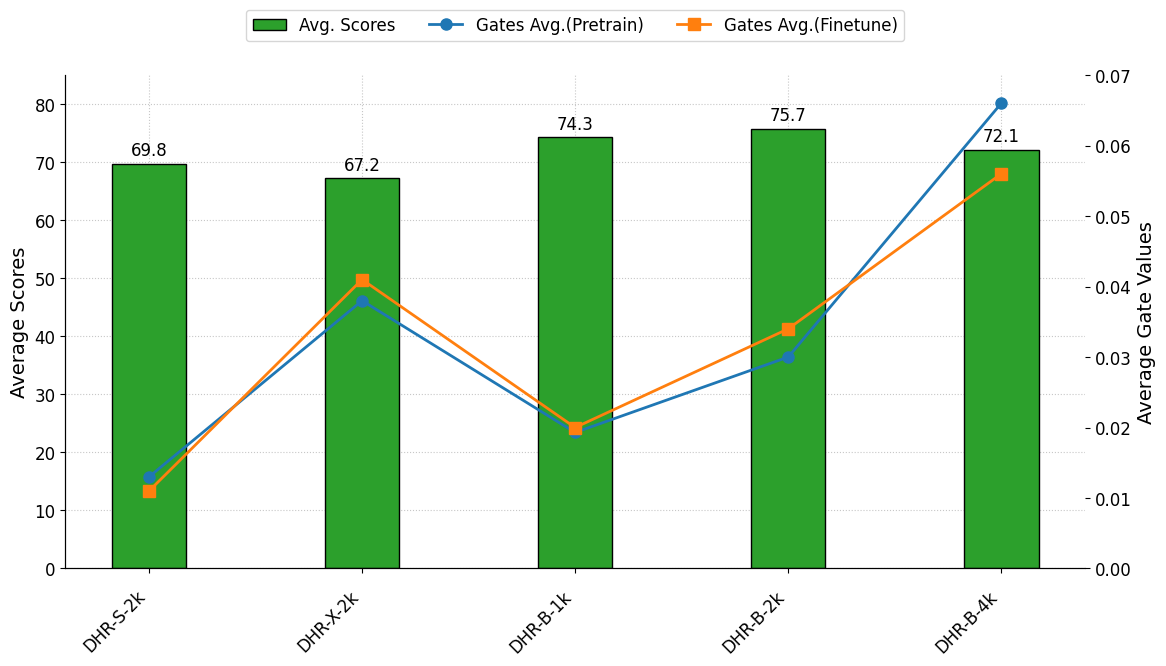}    
    \caption{Balancing experiments. Experiment settings are described in~\cref{sec:bal}. ``1k'', ``2k'' and ``4k'' means pretrain steps. All scores are evaluated after fine-tuning the pretrained checkpoint corresponding to the x-axis.}    
    \label{fig:balance}    
\end{figure}  
CoMemo faces a critical challenge in balancing two visual processing pathways: the cross-attention path and the fully autoregressive path. Through systematic experiments with different high-resolution allocation strategies and training strategies, we identify two key balancing principles. In the mixin layers, we incorporate a gating mechanism that utilizes gate values to reflect the influence of the cross-attention path during the decoding process. Therefore, we choose to evaluate the dependency of LVLM on these two paths from this perspective. Our analysis averages the gates values of mixin layers to quantify pathway preference, with performance evaluated across captioning, general VQA, and OCR tasks.

\textbf{Balance in DHR Allocation.} 
We compare three distinct DHR allocation strategies: (1) allocating DHR information exclusively to the fully autoregressive path, termed DHR-S, (2) assigning it solely to the cross-attention path, termed DHR-X, which corresponds to the allocation mechanism of NVLM-H~\cite{nvlm}, and (3) distributing it to both pathways, termed DHR-B. In unilateral allocation scenarios, the counterpart pathway receives only thumbnail resolution information. As shown in~\cref{fig:balance}, DHR allocation significantly influences pathway specialization. When DHR is exclusively allocated to one pathway, the model shows a pronounced bias toward that pathway. In contrast, dual-pathway allocation results in more stable and balanced model behavior.

\textbf{Balance in training steps.}
Our findings reveal that pretraining steps profoundly affects the equilibrium between the two mechanisms. Although fine-tuning involves extended training (9,000 steps), the visual processing paradigm is largely established during pretraining. In ~\cref{fig:balance}, the DHR-B configuration demonstrates that insufficient pretraining steps result in suboptimal alignment, leading to compromised fine-tuning performance. Conversely, excessive pretraining induces over-reliance on specific pathways. While fine-tuning stage attempts to mitigate this imbalance, it ultimately fails to fully rectify the entrenched dependency in ``DHR-B-4k'' in~\cref{fig:balance}.

This phenomenon arises because during pretraining, only the memory branch and projector parameters are trainable. The projector's limited function of mapping image representations into the text space provides minimal gains in visual comprehension. Consequently, prolonged pretraining naturally reinforces reliance on the cross-attention branch.

\finding{3}{CoMemo exhibits a fundamental balancing challenge between dual visual processing pathways.}
\vspace{0.3cm}
\finding{4}{CoMemo's dual-path visual processing paradigm establishes during pretraining.}

\section{Method}\label{sec:met}
\begin{figure*}[t!]
    \centering
    \begin{minipage}[t]{0.24\linewidth} 
        \centering
        \includegraphics[width=\linewidth, trim={0cm 1.7cm 10cm 0.2cm}, clip]{./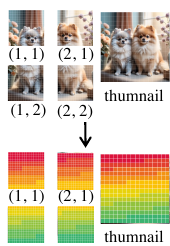}
        \caption{The computation process of Rope-DHR. The colors are assigned based on a mapping of position IDs in RoPE.}
        \label{fig:rope_2d}
    \end{minipage}
    \hfill
    \begin{minipage}[t]{0.42\linewidth} 
        \centering
        \includegraphics[width=\linewidth, trim={2.9cm 0 2.5cm 0}, clip]{./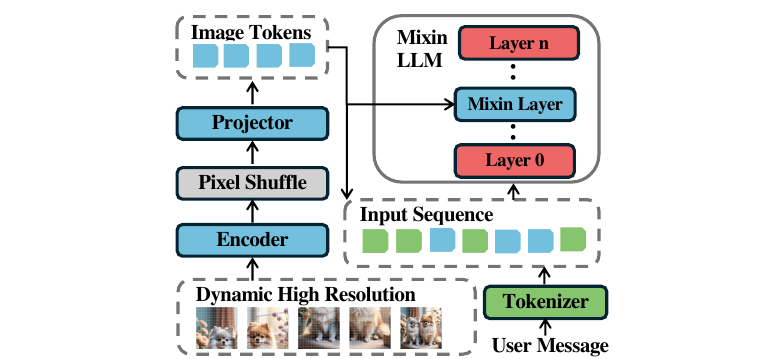}
        \caption{Framework of CoMemo. Both paths share the same encoder and projector. The pixel shuffle technique is adopted from InternVL series~\cite{internvl2,internvl-2.5}.}
        \label{fig:CoMemo}
    \end{minipage}
    \hfill
    \begin{minipage}[t]{0.3\linewidth} 
        \vspace*{-13\baselineskip} 
        \centering
        \begin{algorithm}[H] 
            \caption{Mixin Layers}
            \label{alg:mixin_layers}
            \begin{algorithmic}[1]
            \raggedright
                \REQUIRE 
                        $h_s$ (sequence hidden states), \\
                         $h_i$ (image hidden states), 
                         $attn\_gate$ , 
                         $\text{ffw}\_gate$ , \\
                         $pos_s$ (sequence position IDs), \\
                         $pos_i$ (image position IDs)
                \ENSURE Updated $h_s$
                \STATE $h_s \gets h_s + \tanh(attn\_gate) \odot \text{cross\_attn}(q=h_s, kv=h_i, pos_s=pos_s, pos_i=pos_i)$
                \STATE $h_s \gets h_s + \tanh(\text{ffw}\_gate) \odot \text{ffw}(h_s)$
                \STATE \textbf{Return:} $h_s$
            \end{algorithmic}
        \end{algorithm}
    \end{minipage}
\end{figure*}
\subsection{RoPE-DHR}
Standard RoPE implementations in LLMs employ a continuous incremental positional encoding scheme, which is inherited by LVLMs. While effective for sequential text data, this approach faces significant limitations when processing high-resolution visual inputs: (1) \emph{remote decay} due to extended context lengths, and (2) \emph{dimensional collapse} of 2D spatial relationships. 
To address these limitations, we propose RoPE-DHR, a novel position encoding method that employs a hierarchical strategy.

We first processes the thumbnail patch tokens using conventional RoPE to generate base position IDs. For high-resolution tiles, we establish geometric correspondence by mapping each tile patch's coordinates $(x_{tile}, y_{tile})$ to its corresponding thumbnail patch index $i_{thumb}$. This mapping is defined as:

\begin{equation}
i_{thumb} = (\lfloor x_{tile} \times \frac{W_{tile}} {W_{orig}}  \rfloor + wb_{tile} , \lfloor y_{tile} \times\frac{H_{orig}}{H_{thumb}} \rfloor + hb_{tile})
\end{equation}

where $(W_{orig}, H_{orig})$ and $(W_{tile}, H_{tile})$ denote the original and tile dimensions respectively. The terms $wb_{tile}$ and $hb_{tile}$ represents the start position biases for the tile in the width and height dimensions.This mapping preserves 2D spatial relationships while maintaining compatibility with existing RoPE implementations. In~\cref{fig:rope_2d}, we visualize the raw image,the thumnail and their corresponding patch token  position IDs using a color bar.

Crucially, our method decouples positional encoding from absolute sequence positions through:
(1) length reduction: prevent the sparse position encoding for DHR by compression the position length in global perspective.
(2) geometry reserve: tile patches inherit positional context from their thumbnail anchors.

\subsection{Architecture of CoMemo}
While LLaVA-like architectures demonstrate effective visual-language alignment, they exhibit a tendency to disregard visual information when processing lengthy contextual inputs or generating extended responses. To address this limitation, we propose the CoMemo architecture, which is based on three key structures:

\textbf{Dual-stream Structure.}  
CoMemo maintains tow visual processing streams: (1) \emph{context path} serves as the primary processing stream where image representations are treated as special tokens, concatenated with text tokens to form the input sequence for autoregressive modeling. (2) \emph{memory path} establishes an auxiliary processing stream where image representations interact with the input sequence through cross-attention mechanisms. The two paths maintain identical image representations, ensuring feature consistency while enabling complementary processing and the balance between in tow path as we disccussed in~\cref{sec:bal}.

\textbf{Position-aware Cross-attention.} 
Existing LVLM-X typically employ absolute positional encoding for image patch tokens during encoding~\cite{llama3.2}, with some variants introducing additional positional semantics through specialized tokens~\cite{nvlm}. However, these approaches provide only unidirectional positional awareness.
To address this limitation, we implement RoPE in cross-modal attention, establishing bidirectional positional awareness: query positions ($pos_s$) correspond to input sequence token ordering  and key positions ($pos_i$) align with visual token indices in the input sequence. The attention mask employs bidirectional visibility constraints, similar to mPlug-owl3~\cite{mplug-owl3}, where image tokens are visible to their corresponding sequence positions while maintaining bidirectional attention.

\textbf{Memory Mixin Strategy} 
As shown in~\cref{fig:CoMemo}, CoMemo mixin layers are interleaved with standard transformer blocks at a 1:4 ratio. Each memory layer performs: (1) Gated Cross-attention: Modulates visual influence through learnable attention gates ($attn\_gate$). (2) Adaptive Feed-forward: Enhances feature transformation via gated nonlinearity ($\text{ffw}\_gate$). 

During autoregressive decoding, CoMemo requires only a single-step computation between the current decoding token and the cached visual memory states, eliminating the need for key-value caches. This approach circumvents the issue of increasing key-value cache size as the sequence lengthens. The orthogonal design of the architecture ensures compatibility with existing LLaVA variants.

\subsection{Training Stages}
Traditional LVLMs training consists of two phases: pretraining and fine-tuning. During the pretraining phase, we selectively update the projector module and memory architecture parameters while keeping other components frozen. This phase prioritizes cross-modal representation alignment and dynamic equilibrium between dual visual processing pathways.

A critical challenge emerges during pretraining stemming from asymmetric parameter updates: (1) Insufficient training iterations lead to suboptimal projector learning; (2) Prolonged training induces excessive dependency on the memory pathway for LVLM decoding. This imbalance originates from the disparate update strategies - full fine-tuning of memory parameters versus partial tuning of the context projector. Consequently, the LVLM exhibits inherent bias towards memory-based information retrieval to achieve loss minimization.

To mitigate this optimization bias, we propose a three-stage training strategy. In the first stage, we tune the parameters of projector and mixin layers. In the next stage, we freeze the gate parameters. This design aims to enable the model to learn representation alignment and balance the dual-path structure in the first stage. After a certain number of training steps to prevent over-reliance on the cross-attention path, we freeze the corresponding gate control parameters and continue training until the alignment structure is sufficiently learned.

The subsequent fine-tuning stage adopts full-parameter training paradigm. During this stage, all model parameters become trainable with the training objective shifted towards instruction-following.
\section{Experiments}\label{sec:exp}
\subsection{Setup}
To ensure a fair comparison of the capabilities across various architectures, all three of our models utilize the same pretraining and fine-tuning datasets. Each architecture adopts InternLM-1.8B as the LLM backbone and InternViT-300M as the image encoder. The hidden dimensions of each architecture’s network, along with hyperparameters such as learning rate and weight decay during training, are also kept consistent. We use the same training data as InternVL-2. Therefore, LVLM-S in our experiments essentially represents InternVL-2. Training for all architectures is divided into two phases: pretraining and fine-tuning. For specific settings, please refer to the appendix.

\begin{table*}[]
\tiny
\centering
\caption{Comparison with other LVLMs of different architectures. \textsuperscript{*} indicates results obtained from our own experiments. Other results are sourced from official reports or third-party evaluation leaderboards.}
\resizebox{\textwidth}{!}{
\begin{tabular}{lccccccccccccccccccc}
\hline
\multirow{2}{*}{\textbf{Model}} & \multirow{2}{*}{\textbf{Params.}} & \multicolumn{3}{c}{\textbf{Caption}} & \multicolumn{2}{c}{\textbf{Long-Generation}} & \multicolumn{3}{c}{\textbf{Multi-Image}} & \multicolumn{2}{c}{\textbf{Long-Context}} & \multicolumn{2}{c}{\textbf{Math}} & \multicolumn{3}{c}{\textbf{General VQA}} & \multicolumn{3}{c}{\textbf{OCR-Related}} \\ 
\cmidrule(lr){3-5}   \cmidrule(lr){6-7}  \cmidrule(lr){8-10}  \cmidrule(lr){11-12}  \cmidrule(lr){13-14}  \cmidrule(lr){15-17}  \cmidrule(lr){18-20}
                              &  & \rotatebox{90}{COCO} & \rotatebox{90}{Flickr} & \rotatebox{90}{No-Caps} & \rotatebox{90}{LLaVABen.} & \rotatebox{90}{MMDU} & \rotatebox{90}{BLINK} & \rotatebox{90}{Mantis} & \rotatebox{90}{MMT} & \rotatebox{90}{MM-NIAH} & \rotatebox{90}{MileBench} & \rotatebox{90}{MathVista} & \rotatebox{90}{MathVision} & \rotatebox{90}{MMBench} & \rotatebox{90}{MME} & \rotatebox{90}{MMVP} & \rotatebox{90}{AI2D} & \rotatebox{90}{ChartQA} & \rotatebox{90}{TextVQA} \\ \hline
Closed-Source & \\ 
GPT-4o & - & - & - & - & - & 70.2 & 68.0 & - & 65.4 & - & 60.3 & 63.8 & - & - & - & - & 84.6 & 85.7 & 77.4 \\
GPT-4V & - & - & - & - & - & - & 54.6 & 62.7 & 64.3 & - & 53.0 & 58.1 & - & - & 1926.6 & 38.7 & 78.2 & 78.5 &  78.0\\
\hline
LVLM-X & \\ 
mPlug-owl3 & 8B & 90\textsuperscript{*} & 72\textsuperscript{*} & 94.2\textsuperscript{*} & 64.2\textsuperscript{*} & - & 50.3 & 63.1 & - & - & 50.0\textsuperscript{*} & - & - & - & - & - & 73.4 & - & - \\
LLama3.2-V & 11B & - & - & - & 80.9 & - & 39.8 & - & 53.1 & - & - & 51.5 & - & 77.3 & 1820.5 & - & 91.1 & 83.4 & 73.1\\
\hline
LVLM-S & \\ 
MiniCPM-V-2 & 2.8B & - & - & - & 66.1 & - & 41.2 & - & 54.5 & - & - & 39 & 15.4 & 62.9 & 1808.2 & - & 64.8 & 59.6 & -\\
InternVL-2 & 2B & 79.1\textsuperscript{*} & 65.4\textsuperscript{*} & 60.0\textsuperscript{*} & 62.9\textsuperscript{*} & 28.7\textsuperscript{*} & 38.2\textsuperscript{*} & 48.3\textsuperscript{*} & 50.2\textsuperscript{*} & 27.0\textsuperscript{*} & 52.1\textsuperscript{*} & 48.0\textsuperscript{*} & 16.4\textsuperscript{*} & 73.1\textsuperscript{*} & 1869\textsuperscript{*} & 31.3\textsuperscript{*} & 74.3\textsuperscript{*} & 75.6\textsuperscript{*} & 74.2\textsuperscript{*}\\
InternVL-2.5 & 2B & 114\textsuperscript{*} & 80.2\textsuperscript{*} & 108.6\textsuperscript{*} & - & 33.6\textsuperscript{*} & 44.0 & 54.8 & 54.5 & 22.2\textsuperscript{*} & 50.1\textsuperscript{*} & 51.3 & 13.5 & 74.7 & 2138.2 & 40 & 74.9 & 79.2 & 74.3  \\
Qwen2-VL & 2B & 108.2 & 86.3 & - & 52.5 & - & 44.4 & - & 55.1 & - & - & 43.0 & 19.7 & - & 2326.8 & - & 74.7 & 73.5 & 79.7 \\
\hline
CoMemo & 2B & 98.6 & 78.5 & 78.8 & 66.9 & 38.7 & 43.5 & 50.6 & 51.3 & 34.2 & 54.6 & 50 & 17.0 & 74.2 & 1904 & 36.0 & 74.2 & 73.6 & 72.6  \\
\hline
\end{tabular}
}
\label{tab:sota}
\end{table*}
\subsection{Comparison with Other LVLMs}

As shown in Table~\ref{tab:sota}, we benchmark CoMemo against leading open-source and proprietary LVLMs. It is important to note that our primary objective is architectural ablation rather than absolute performance maximization. The comparison with other models serves to demonstrate that our model was trained on a large-scale dataset and achieves top-tier performance, rather than being tested on toy datasets.

\subsection{Generation and Math Benchmark}
\label{sec:exp_gen}
\begin{table}[t]
\centering
\small
\caption{The results on Generation and Math benchmarks. The test settings are described in~\cref{sec:exp_gen}. The highest scores are highlighted in \textbf{bold}.}
\resizebox{\linewidth}{!}{ 
\begin{tabular}{@{}lccccccc@{}}
\toprule
\multirow{2}{*}{\textbf{Model}} & \multicolumn{3}{c}{\textbf{Caption}} & \multicolumn{2}{c}{\textbf{Long Generation}} & \multicolumn{2}{c}{\textbf{Math}} \\
\cmidrule(lr){2-4} \cmidrule(lr){5-6} \cmidrule(lr){7-8}
  & COCO & Flickr & No-Caps & LLaVABench & MMDU & MathVista & MathVision \\
\midrule
LVLM-X & 84.9 & 68.9 & 62.9 & 50.8 & 31.5 & 44.2 & 15.8 \\
LVLM-S & 79.1 & 65.4 & 60.0 & 62.9 & 28.7 & 48.0 & 16.4\\
Ours & \textbf{98.6} & \textbf{78.5} & \textbf{78.8} & \textbf{66.9} & \textbf{38.7} & \textbf{50.0} & \textbf{17.0}\\
\bottomrule
\end{tabular}
}
\label{tab:gen}
\end{table}
Contemporary multimodal benchmarks primarily rely on constrained response formats (e.g., multiple-choice questions and Yes/No queries) to facilitate evaluation. However, such approaches inadequately assess open-ended reasoning capabilities, as free-form responses introduce higher diversity and complexity. To comprehensively evaluate model capabilities across different granularities, we collected several benchmarks for open-ended evaluation:
\vspace{-0.5\baselineskip}
\begin{itemize}
    \item Caption Generation: Evaluates the model's ability to generate concise image descriptions (10-15 words) using CIDEr scores on COCO~\cite{COCO}, Flickr30k~\cite{flickr30k}, and NoCaps~\cite{nocaps} datasets.

    \item Long Generation: Evaluates extended inference using LLaVABench~\cite{llava} and MMDU~\cite{mmdu}. LLaVABench focuses on single-image context reasoning, involving a few hundred context tokens, while MMDU is a multi-image analysis with lengthy textual context (avg. 6.4k tokens) and multiple images (ranging from 2 to 20 images).

    \item Math: Measures model reasoning ability on math diagrams and visual math problems using MathVision~\cite{mathvision} and MathVista~\cite{mathvista}. During evaluation, we asked the model to provide step-by-step reasoning and extracted the final answer to calculate accuracy.
\end{itemize}
\vspace{-0.5\baselineskip}
As shown in~\ref{tab:gen}, our analysis reveals three key findings. First, LVLM-X's continuous attention mechanism demonstrates superiority in pure visual captioning tasks, achieving a +4\% average improvement. Second, LVLM-S's causal attention architecture achieves better performance in knowledge-intensive scenarios, demonstrating enhanced contextual reasoning capabilities from its LLM backbone. Our proposed CoMemo combines the advantages of both approaches, outperforming the original architectures across various tasks. This supports our hypothesis that dual-path attention allocation effectively integrates the benefits of both architectures: maintaining visual grounding while enabling complex reasoning.

\subsection{Multi-image and Long-context Benchmark}
\label{sec:exp_multi}
\begin{table}[t]
\centering
\small
\caption{The results on Multi-image and Long-context benchmarks. The test settings are described in~\cref{sec:exp_multi}. The highest scores are highlighted in \textbf{bold}.}
\resizebox{\linewidth}{!}{ 
\begin{tabular}{@{}lcccccc@{}}
\toprule
\multirow{2}{*}{\textbf{Model}} & \multicolumn{3}{c}{\textbf{Multi-Image}} & \multicolumn{3}{c}{\textbf{Long-Context}} \\
\cmidrule(lr){2-4} \cmidrule(lr){5-7} 
  & BLINK & Mantis & MMT & MM-NIAH-M & MM-NIAH-T & MileBench \\
\midrule
LVLM-X & 41.5 & 46.5 & 47.8 & \textbf{39.5}\footnotemark[1] & 29.5\footnotemark[1] & 53.2\footnotemark[1] \\
LVLM-S & 38.2 & 48.3 & 50.2 & 26.7 & 27.3 & 52.1\\
Ours & \textbf{43.5} & \textbf{50.6} & \textbf{51.3} & 33.7 & \textbf{34.6} & \textbf{54.6}\\
\bottomrule
\end{tabular}
}
\label{tab:multi and long}
\end{table}
\footnotetext[1]{LVLM-X's single image token compression reduces average context length by 50\% (e.g., 32k→16k).}
To systematically evaluate multimodal long-context understanding, we establish two complementary evaluation dimensions:
\vspace{-0.5\baselineskip}
\begin{itemize}
    \item Multi-image: When DHR is enabled, each image contributes approximately 2k tokens to the context length, thereby necessitating extended context capacity for multi-image analysis. We evaluate performance on the BLINK~\cite{blink}, Mantis~\cite{mantis}, and MMT~\cite{mmt} datasets.

    \item Long-context: Tests information extraction in long context scenarios. We select MM-NIAH~\cite{mmniah} evaluation that detects image/text needles within hybrid long contexts. And MileBench~\cite{milebench} progressively challenging tasks with 2-109 images.
These benchmarks systematically quantify long-context capabilities from both textual token and visual token perspectives.
\end{itemize}
\vspace{-0.5\baselineskip}

As shown in~\cref{tab:multi and long}, our proposed architecture achieves the best performance in scenarios involving multiple images and long contexts. Specifically, MM-Niah-T represents the needle that is the key information placed in the text, while MM-NIAH-M represents the needle placed in the image. In the evaluation of MM-NIAH-T, the memory structure stores image data that is unrelated to the needle, redundant information. Nevertheless, our model still achieves the best performance. This not only demonstrates that compressing the image token position space through RoPE-DHR enhances the model's ability to understand long sequence texts but also indicates that the Memory structure does not cause the model to overfocus on image information, thereby preserving its ability to retrieve and reason with language effectively.

In the MileBench evaluation, due to the potential for excessive image tokens leading to long context sequences and out-of-memory issues, we did not enable DHR settings. Therefore, in this evaluation, each image in the input sequence has only a single image thumbnail. In this scenario, our architecture’s positional encoding is the same as LVLM-S, primarily reflecting the role of the memory structure. Despite this, our architecture still achieved a 2.5\% improvement over LVLM-S.

The MileBench benchmark also includes needle in haystack tasks for both text and image scenarios. In~\cref{fig:combined_figures}, we visualize the average results for these two types of NIAH tasks in MileBench. As mentioned earlier, we observed that the attention mechanism of LLMs and LVLMs exhibits a bimodal distribution, where LVLMs tend to focus more on the beginning and the most recent tokens, leading to the ``Lost in the middle'' phenomenon. This means that when the needle is placed in the middle of the sequence, the model’s performance on NIAH tasks deteriorates. Our architecture, however, addresses this issue by continuously focusing on the ``middle image information'' during the token generation process, effectively mitigating this problem.

\begin{figure}
    \centering
    \begin{subfigure}{0.48\linewidth}
    \includegraphics[width=\linewidth]{./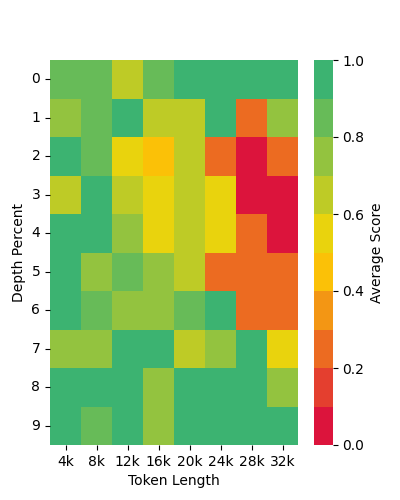}
      \caption{LVLM-S.}
      \label{fig:lvlmc_needle}
    \end{subfigure}
    \hfill
    \begin{subfigure}{0.48\linewidth} 
      \includegraphics[width=\linewidth]{./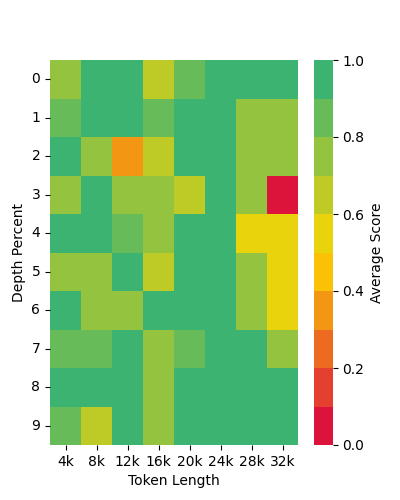}
      \caption{CoMemo}
      \label{fig:ours_needle}
    \end{subfigure}
    \caption{Heatmap of results for the NIAH evaluation on MileBench benchmark. The depth percentage indicates the position of the target information (needle) relative to the entire sequence.} 
    \label{fig:combined_figures}
\end{figure}

\subsection{General VQA and OCR Benchmark}
\label{sec:exp_ocr}
\begin{table}[t]
\centering
\small
\caption{The results on General VQA and OCR-related benchmarks. The test settings are described in~\cref{sec:exp_ocr}. The highest scores are highlighted in \textbf{bold}.}
\resizebox{\linewidth}{!}{ 
\begin{tabular}{@{}lcccccc@{}}
\toprule
\multirow{2}{*}{\textbf{Model}} & \multicolumn{3}{c}{\textbf{General VQA}} & \multicolumn{3}{c}{\textbf{OCR-Related}} \\
\cmidrule(lr){2-4} \cmidrule(lr){5-7} 
  & MMBench & MME & MMVP  & AI2D & ChartQA & TextVQA \\
\midrule
LVLM-X & 69.6 & 1812 & 27.3 & 69.9 & 69.4 & 68.8 \\
LVLM-S & 73.1 & 1869 & 31.3 & \textbf{74.3} & \textbf{75.6} & \textbf{74.2} \\
Ours & \textbf{74.2} & \textbf{1904} & \textbf{36.0} & 74.2 & 73.6 & 72.6 \\
\bottomrule
\end{tabular}
}
\label{tab:general}
\end{table}
To comprehensively evaluate CoMemo's multimodal understanding capabilities, we conduct extensive experiments on conventional vision-language benchmarks covering two main categories:
\vspace{-0.5\baselineskip}
\begin{itemize}
    \item General VQA: Assessing visual perception and reasoning abilities through single-image question answering, including MMBench~\cite{mmbench}, MME~\cite{mme}, and MMVP~\cite{mmvp}.
    \item OCR-Related: Requiring fine-grained information extraction from diagrams and charts, evaluated on AI2D~\cite{ai2d}, TextVQA~\cite{textvqa}, and ChartQA~\cite{chartqa}.
\end{itemize}
\vspace{-0.5\baselineskip}
The input sequences in these two types of benchmarks are relatively short, and responses typically consisting of single words or short phrases. In contrast, CoMemo incorporates a memory structure that alleviates image neglect by introducing an additional image focus mechanism. Additionally, RoPE-DHR addresses image neglect by compressing image information to reduce the long-range decay caused by positional encoding. While these techniques are not specifically tailored for the benchmarks mentioned above, our architecture still performs competitively when compared to LVLM-S.

As shown in~\cref{tab:general}, our architecture performs slightly better than LVLM-S on some general multimodal benchmarks. However, in tasks such as text and OCR, which require high-resolution image information, traditional approaches typically rely on more granular image representations. This approach contrasts with the philosophy underlying our approach, which focuses on improving the model's long-context and long-generation capabilities.

\subsection{Training and Inference Efficiency Comparison}
\begin{table}[ht]\scriptsize
\renewcommand{\arraystretch}{1.2}
\caption{Training efficiency comparison across different 2B parameter models.}

    \setlength\tabcolsep{6.4pt}
    \centering 
    \resizebox{\linewidth}{!}{
    \begin{tabular}{lcccc}
        \hline
        Model & Batch size & \# of A100 GPUs & Train steps/s & Train samples/s \\
        \hline
        LVLM-X & 1024 & 64 & 0.123 & 15.71 \\
        LVLM-S & 1024 & 64 & 0.105 & 13.4 \\
        CoMemo & 1024 & 64 & 0.096 & 12.26 \\
        \hline
    \end{tabular}}
\label{tab:training_efficiency} 
\end{table}

\begin{table}[ht]\scriptsize
\renewcommand{\arraystretch}{1.2}
\caption{Inference speed comparison across different 2B parameter models (lower is better).}

    \setlength\tabcolsep{6.4pt}
    \centering 
    \resizebox{\linewidth}{!}{
    \begin{tabular}{lcccccc}
        \hline
        Model & Batch size & \# of A100 GPUs & \makecell{Caption COCO\\(sec)} & \makecell{MMBench\\(sec)} & \makecell{MMNIAH\\(min)} & \makecell{TextVQA\\(sec)} \\
        \hline
        LVLM-X & 1 & 8 & 260 & 76 & 22 & 88 \\
        LVLM-S & 1 & 8 & 270 & 90 & 25 & 100 \\
        CoMemo & 1 & 8 & 280 & 110 & 30 & 120 \\
        \hline
    \end{tabular}}
\label{tab:inference_speed} 
\end{table}
To comprehensively compare the training and inference efficiency of three architectures, we measured their sample throughput during training and inference times across multiple benchmarks. As shown in Table~\ref{tab:training_efficiency} and Table~\ref{tab:inference_speed}, we report the training and inference efficiency for each architecture. The results demonstrate that CoMemo achieves latency comparable to LVLM-S. Although LVLM-X exhibits higher efficiency due to the use of fewer image tokens, its performance is significantly inferior to both CoMemo and LVLM-S.

\subsection{Ablation Study}
\label{sec:abla}
\begin{table*}[]
\centering
\tiny
\caption{Ablation Study. The test settings are described in~\cref{sec:abla}. ``NC'' means to use RoPE-DHR without compression.}
\resizebox{\textwidth}{!}{
\begin{tabular}{lccccccccccccc}
\hline
\textbf{Name} & \textbf{Params.} & \textbf{Memory} & \textbf{RoPE-DHR} & \textbf{Datasets} & \textbf{Overall} & \textbf{Caption} & \textbf{Long-Generation} & \textbf{Multi-Image}  & \textbf{Long-Context} & \textbf{Math} & \textbf{General VQA} & \textbf{OCR-Related} \\ \hline
Variant 1 & 2B & $\times$  & $\times$ & Ours & 55.4 & 68.1 & 45.8 & 45.5 & 39.5 & 32.2 & 65.9 & 74.7 \\
Variant 2 & 2B & $\times$ & $\checkmark$ & Ours & 56.7 & 67.2 & 51.4 & 47.5 & 42.0 & 31.7 & 69.2 & 72.9 \\
Variant 3 & 2B & $\checkmark$ & $\times$ & Ours & 59.0 & 82.8 & 49.6 & 46.0 & 43.0 & 32.2 & 66.7 & 75.6  \\
Variant 4 & 2B & $\checkmark$ & $\text{NC}$ & Ours & 59.5 & 79.7 & 51.9 & 48.5 & 43.1 & 34.7 & 67.3 & 74.8 \\
Variant 5 & 2B & $\checkmark$ & $\checkmark$ & Ours & 60.5 & 85.3 & 52.8 & 48.5 & 44.4 & 33.5 & 68.4 & 73.4\\  \hline
Variant 6 & 8B & $\times$ & $\times$ & Ours & 65.4 & 77.7 & 61.0 & 57.4 & 45.3 & 38.7 & 78.3 & 82.3\\ 
Variant 7 & 8B & $\checkmark$ & $\checkmark$ & Ours & 68.0 & 92.1 & 61.4 & 57.7 & 50.8 & 38.9 & 78.1 & 79.1 \\ \hline
Variant 8 & 2B & $\times$ & $\times$ & 1.2M & 51.1 & 79.7 & 36.1 & 44.1 & 28.7 & 28.8 & 58.5 & 61.8\\ 
Variant 9 & 2B & $\checkmark$ & $\checkmark$ & 1.2M & 54.6 & 89.5 & 38.4 & 46.2 & 31.5 & 28.7 & 62.3 & 63.5 \\ \hline
\end{tabular}
}
\label{tab:abla}
\end{table*}

\textbf{Ablation on Components of CoMemo.} 
We conducted a complete ablation study on components using a 2B-scale model, as shown in Variants 1 to 5 in~\cref{tab:abla}. 
When only RoPE-DHR is introduced, the compressed position encoding significantly improves performance on both long-generation and long-context tasks (Variant 2). 
When only the memory path was introduced, it addressed issues related to neglecting image information, leading to some improvement in the model's performance (Variant 3). 
However, since the image tokens in the memory path lack positional information, during cross-attention computation, there is no positional correspondence between image tokens and text tokens. 
Moreover, the lack of distinguishing features between image tiles, thumnails, and multiple images hindered the model's capabilities in different dimensions. Therefore, after incorporating RoPE-DHR, the model's capabilities in different dimensions were further enhanced (Variant 5). However, since RoPE-DHR is essentially a compression-oriented encoding, it may affect scenarios like OCR that require fine-grained information.

\textbf{Ablation on compression ratio of RoPE-DHR.} In the main experiments, RoPE-DHR uses shared position IDs for image tokens in the thumbnail and their corresponding subimage tokens. This approach effectively compresses the position encoding. Therefore, we propose a variant, RoPE-DHR without compression, where the position encodings for subimage tokens corresponding to a two-dimensional position increment by 1 relative to the thumbnail image tokens, while the position IDs between thumbnail image tokens increment based on the number of tokens at their corresponding positions, rather than by 1. The experimental results are shown in~\cref{tab:abla} in Variants 4. It can be observed that, without compression, the model outperforms the LVLM-S architecture across all dimensions.

\textbf{Ablation on different scale models.} To verify that CoMemo adheres to the scaling law, we select InternLM-7B as the language model for experiments at the 7B scale. As shown in~\cref{tab:abla} in Variant 6 and 7, at the 8B scale, CoMemo’s average performance across all dimensions remains superior to the LVLM-S architecture. The CoMemo architecture continues to deliver outstanding performance in both Cation and Long-context tasks. As language models scale up, the impact of compressed position encoding becomes more pronounced in OCR tasks.

\textbf{Consistency on Different Datasets.}
We also conducted dataset-switching experiments using the open-source InternVL-1.2 1.2M fine-tuning dataset~\cite{internvl_1_2_data} during the SFT stage, while keeping the pretraining data consistent. As shown in Variants 8 to 9 in~\cref{tab:abla}, even with the changes in the dataset, our CoMemo consistently outperforms the LVLM-S architecture across various dimensions. 
\section{Related Works}\label{sec:rel}
\subsection{Mainstream LVLMs and Their Architectures}
Contemporary LVLMs typically employ pre-trained language models as decoders, utilizing two dominant strategies for visual-text alignment: (1) cross-attention mechanisms and (2) joint projector-autoregressive architectures.

Fully Autoregressive LVLMs:
LLaVA~\cite{llava} pioneered this approach by projecting image representations into the LLM's space and jointly decoding them with text tokens. Subsequent models like VILA-v1.5~\cite{vila} and LLaVA-Next~\cite{llavanext} built on this architecture, with LLaVA-Next introducing dynamic high-resolution techniques for improved performance. Other advancements include Qwen2-VL~\cite{Qwen2VL}, which introduced M-RoPE, and InternVL-2.5~\cite{internvl-2.5}, which used pixel shuffle to reduce token count after DHR. DeepSeek-VL2~\cite{deepseek-vl2} employed a pretrained MoE-based LLM. However, this alignment approach inherits the LLM's generation mechanism, which can lead to issues such as ``image neglect'' or the ``lost in the middle'' problem.

Cross-Attention-Based LVLMs:
Flamingo~\cite{flamingo} is an early example of LVLMs using cross-attention mechanisms. Later models, like Idefics~\cite{idefics2} and LLaMa-3.2-Vision~\cite{llama3.2}, adopted its mixin layer design, which introduces cross-attention and gating mechanisms. EVLM~\cite{evlm} experimented with using intermediate Vision Transformer representations as inputs to the mixin layer. mPlug-owl3~\cite{mplug-owl3} added adaptive gating and hyper-layer fusion to combine cross-attention and self-attention. This approach enables visual understanding while maintaining the LLM's frozen language ability, as seen in LLaMa-3.2-Vision. However, in these models, image representations are aligned directly to the LLM's hidden state, whereas LLaVA-like methods align them with the text token space, better leveraging autoregressive decoding capabilities and improving performance.

\subsection{Position Encoding Schemes in LVLMs}
Most LVLMs use position encoding methods inherited from LLMs, primarily RoPE. In these models, each image patch token is treated like a text token and assigned position IDs for RoPE computation. However, several advancements address specific challenges. MiniCPM-V~\cite{minicpm-v} introduced an absolute position encoding for each image tile in the context of DHR, while LLaMA-3.2-V~\cite{minicpm-v} designed encodings for both image tiles and patch tokens. NVLM~\cite{minicpm-v}, also leveraging DHR, added special tokens before each tile to convey positional information. While effective for predefined DHR ratios, these methods lack scalability.

In contrast, Qwen-VL2~\cite{Qwen2VL} introduced M-RoPE, a multi-dimensional position encoding extending RoPE to three channels (temporal, height, width) for images and videos. However, this position encoding requires a customized ViT and thus cannot be applied to LVLMs employing DHR.

Our proposed RoPE-DHR, based on 1D principles, offers a 2D-aware encoding scheme that addresses these challenges without the extra computational burden.
\section{Conclusion}\label{sec:con}
We present CoMemo, a novel architecture for Large Vision-Language Models specifically designed for long-form generation and extended context understanding. Our approach features a dual-path image processing mechanism and introduces RoPE-DHR to alleviate remote decay in DHR scenarios while restoring critical two-dimensional spatial information. These innovations significantly enhance model performance across multiple tasks including image captioning, long-form generation, long-context understanding, multi-image analysis, and general visual question answering. We hope this work will contribute to the advancement of the vision-language modeling community.

\section*{Impact Statement}
This paper presents work whose goal is to advance the field of 
Large Vision-Language Model. There are many potential societal consequences 
of our work, none which we feel must be specifically highlighted here.

\section*{Acknowledgments}
The work is supported by the National Key R\&D Program of China (NO. 2022ZD0161301), by the National Natural Science Foundation of China (U24A20325, 62321005, 62376134).
\bibliography{example_paper}
\bibliographystyle{icml2025}

\newpage
\appendix
\onecolumn


\section{Theoretical Derivation of Remote Decay}

\begin{definition}[RoPE-induced Inner Product]
\label{def:rope_innerprod}
Given a query vector \( q \in \mathbb{C}^d \) and a key vector \( k \in \mathbb{C}^d \), their inner product under Rotary Position Embedding (RoPE) is defined as:
$$
(R_m q)^\top (R_n k) \triangleq \text{Re}\left[ \sum_{i=0}^{d/2-1} q_{[2i:2i+1]} k^*_{[2i:2i+1]} e^{\mathrm{i}(m-n)\theta_i} \right]
$$
where \( q_{[2i:2i+1]} \) denotes the \( i \)-th 2D subvector of \( q \), \( \theta_i \) is the preset rotational frequency, and \( ^* \) represents complex conjugation.
\end{definition}

Building on \cref{def:rope_innerprod}, we establish the following core lemma:

\begin{lemma}[Abel Transform Representation]
\label{lem:abel_transform}
Let \( h_i \triangleq q_{[2i:2i+1]}k^*_{[2i:2i+1]} \) and \( S_j \triangleq \sum_{i=0}^{j-1} e^{\mathrm{i}(m-n)\theta_i} \), with boundary conditions \( h_{d/2} = 0 \) and \( S_0 = 0 \). Then:
$$
\sum_{i=0}^{d/2-1} h_i e^{\mathrm{i}(m-n)\theta_i} = \sum_{i=0}^{d/2-1} h_i (S_{i+1} - S_i) = -\sum_{i=0}^{d/2-1} S_{i+1} (h_{i+1} - h_i)
$$
\end{lemma}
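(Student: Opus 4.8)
The plan is to establish the Abel summation (summation by parts) identity for the finite sequence $\{h_i e^{\mathrm{i}(m-n)\theta_i}\}_{i=0}^{d/2-1}$. The key observation is that $e^{\mathrm{i}(m-n)\theta_i} = S_{i+1} - S_i$ by the definition of the partial sums $S_j = \sum_{i=0}^{j-1} e^{\mathrm{i}(m-n)\theta_i}$, since consecutive partial sums differ by exactly the $i$-th term. This immediately yields the first equality $\sum_{i=0}^{d/2-1} h_i e^{\mathrm{i}(m-n)\theta_i} = \sum_{i=0}^{d/2-1} h_i (S_{i+1} - S_i)$.

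For the second equality, I would rewrite the sum $\sum_{i=0}^{d/2-1} h_i (S_{i+1} - S_i)$ by splitting it into $\sum_{i=0}^{d/2-1} h_i S_{i+1} - \sum_{i=0}^{d/2-1} h_i S_i$ and then reindex the second sum. Setting $j = i+1$ in the first sum (or equivalently shifting the index in one of the two sums) lets me collect terms with the same $S$-index. Concretely, $\sum_{i=0}^{d/2-1} h_i S_{i+1} = \sum_{j=1}^{d/2} h_{j-1} S_j$, and combining with $-\sum_{i=0}^{d/2-1} h_i S_i = -\sum_{j=0}^{d/2-1} h_j S_j$ gives $\sum_{j=1}^{d/2-1}(h_{j-1} - h_j) S_j + h_{d/2-1} S_{d/2} - h_0 S_0$. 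The boundary conditions $S_0 = 0$ and $h_{d/2} = 0$ are then used to absorb the endpoint terms: $S_0 = 0$ kills the $h_0 S_0$ term, and we can extend the sum to $j = d/2$ by noting $h_{d/2-1} S_{d/2} = (h_{d/2-1} - h_{d/2})S_{d/2}$ since $h_{d/2} = 0$. After reindexing back to match the stated form, this produces $-\sum_{i=0}^{d/2-1} S_{i+1}(h_{i+1} - h_i)$.

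I do not anticipate a genuine obstacle here — this is the standard Abel transform / summation-by-parts identity, and the only care required is bookkeeping of index ranges and the role of the two boundary conditions. The mild subtlety worth stating explicitly is why the telescoping endpoint contributions vanish or get absorbed: one must check that with $S_0 = 0$ the lower endpoint contributes nothing, and with $h_{d/2} = 0$ the upper endpoint term $S_{d/2} h_{d/2-1}$ can be folded into the difference $h_{i+1} - h_i$ at $i = d/2-1$. I would present the derivation as a short chain of displayed equations performing the split, the reindexing, and the boundary cancellation, being careful to keep each display free of blank lines.
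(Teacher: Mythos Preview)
Your proposal is correct and follows essentially the same route as the paper: both arguments note $e^{\mathrm{i}(m-n)\theta_i}=S_{i+1}-S_i$, split $\sum h_i(S_{i+1}-S_i)$ into two sums, shift one index, and use the boundary conditions $S_0=0$, $h_{d/2}=0$ to absorb the endpoint terms into the difference form $-\sum S_{i+1}(h_{i+1}-h_i)$. Your bookkeeping of the endpoints is in fact slightly cleaner than the paper's (which at one point writes an undefined $h_{-1}$ multiplied by $S_0=0$), but the method is identical.
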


\begin{proof}
By Abel summation by parts:
\begin{align*}
\sum_{i=0}^{d/2-1} h_i (S_{i+1} - S_i) 
&= \sum_{i=0}^{d/2-1} h_i S_{i+1} - \sum_{i=0}^{d/2-1} h_i S_i \\
&= \sum_{i=1}^{d/2} h_{i-1} S_i - \sum_{i=0}^{d/2-1} h_i S_i \quad (\text{index shift}) \\
&= -\sum_{i=0}^{d/2-1} S_i (h_i - h_{i-1}) + h_{d/2-1} S_{d/2} - h_{-1} S_0 \\
&= -\sum_{i=0}^{d/2-1} S_{i+1} (h_{i+1} - h_i) \quad (\text{using boundary conditions})
\end{align*}
\end{proof}

\begin{theorem}[Decay Rate Bound]
\label{thm:decay_bound}
Under the assumptions of \cref{lem:abel_transform}, the absolute value of the inner product satisfies:
$$
\left| \sum_{i=0}^{d/2-1} h_i e^{\mathrm{i}(m-n)\theta_i} \right| \leq \left( \max_{0 \leq i \leq d/2-1} |h_{i+1} - h_i| \right) \sum_{i=0}^{d/2-1} |S_{i+1}|
$$
\end{theorem}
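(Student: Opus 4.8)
The plan is to start directly from the second identity in \cref{lem:abel_transform}, namely
$\sum_{i=0}^{d/2-1} h_i e^{\mathrm{i}(m-n)\theta_i} = -\sum_{i=0}^{d/2-1} S_{i+1}(h_{i+1}-h_i)$,
and simply estimate the right-hand side. First I would apply the triangle inequality to the finite sum, which turns the equality into
$\left|\sum_{i=0}^{d/2-1} h_i e^{\mathrm{i}(m-n)\theta_i}\right| \le \sum_{i=0}^{d/2-1} |S_{i+1}|\,|h_{i+1}-h_i|$.
Then I would bound each factor $|h_{i+1}-h_i|$ uniformly by its maximum $M \triangleq \max_{0\le i\le d/2-1}|h_{i+1}-h_i|$ (well-defined since the index set is finite, and note the boundary condition $h_{d/2}=0$ makes the term $i=d/2-1$ legitimate), pull that constant out of the sum, and arrive at $M\sum_{i=0}^{d/2-1}|S_{i+1}|$, which is exactly the claimed bound.

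In more detail, the steps in order are: (1) invoke \cref{lem:abel_transform} to rewrite the target sum in the Abel-transformed form with $S_{i+1}$ weights; (2) take absolute values and apply the triangle inequality $|\sum a_i| \le \sum |a_i|$ together with multiplicativity $|S_{i+1}(h_{i+1}-h_i)| = |S_{i+1}|\,|h_{i+1}-h_i|$; (3) use $|h_{i+1}-h_i| \le M$ for every $i$ in range to get $\sum_i |S_{i+1}|\,|h_{i+1}-h_i| \le M \sum_i |S_{i+1}|$; (4) identify $M$ with the displayed maximum and conclude. Each step is an elementary inequality, so there is no delicate analysis required.

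Honestly, there is no serious obstacle here — the theorem is an immediate corollary of \cref{lem:abel_transform} plus the triangle inequality, and the only things to be careful about are bookkeeping: making sure the index ranges of the two sums match (both run $i=0,\dots,d/2-1$, so the $S_{i+1}$ appearing are $S_1,\dots,S_{d/2}$), and making sure the boundary convention $h_{d/2}=0$ is what legitimizes writing $h_{d/2-1}$ as $h_{(d/2-1)+1}-h_{(d/2-1)}$ inside the maximum. If I wanted to add interpretive value rather than just verify the bound, I would point out afterwards that this is what exhibits remote decay: the partial sums $S_{i+1}$ of the phases $e^{\mathrm{i}(m-n)\theta_i}$ are, by a standard Dirichlet-kernel-type estimate, $O(1/|m-n|)$ in magnitude for the low-frequency $\theta_i$, so the whole bound shrinks as $|m-n|$ grows — but that quantitative refinement is separate from the statement actually being proved, which only asks for the stated inequality.
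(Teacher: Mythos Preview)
Your proposal is correct and mirrors the paper's own proof essentially step for step: invoke the Abel-transformed identity from \cref{lem:abel_transform}, apply the triangle inequality, and then bound each $|h_{i+1}-h_i|$ by the maximum to factor it out of the sum. The additional remarks about bookkeeping and the interpretive remote-decay comment are fine but go beyond what the paper records.
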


\begin{proof}
From \cref{lem:abel_transform} and the triangle inequality:
\begin{align*}
\left| \sum_{i=0}^{d/2-1} h_i e^{\mathrm{i}(m-n)\theta_i} \right| 
&= \left| \sum_{i=0}^{d/2-1} S_{i+1} (h_{i+1} - h_i) \right| \\
&\leq \sum_{i=0}^{d/2-1} |S_{i+1}| \cdot |h_{i+1} - h_i| \\
&\leq \left( \max_{i} |h_{i+1} - h_i| \right) \sum_{i=0}^{d/2-1} |S_{i+1}|
\end{align*}
\end{proof}

\section{Training Details}
The hyperparameters used for pretraining and finetuning across the three architectures are listed in~\cref{tab:hyper}. We observed that the 2B model, with its smaller hidden dimension, requires less extensive training, making Phase 2 optional. Therefore, in the ablation experiments in the main text, we only used Pretraining Phase 1 for the 2B model, while the 8B model utilized both pretraining phases.
\setlength{\tabcolsep}{2pt}
\setlength{\doublerulesep}{2\arrayrulewidth}
\renewcommand{\arraystretch}{1.1}

\begin{table}[h]
    \renewcommand{\thetable}{5}
    \centering
    \small
    \caption{\textbf{Hyperparameters for Training and Inference}}
    \label{tab:hyper}
    \begin{tabular}{lccc}
        \toprule
        \textbf{Parameter} & \textbf{Pretraining Phase 1} & \textbf{Pretraining Phase 2} & \textbf{Finetuning} \\
        \midrule
        Max sequence length & 8192 & 8192 & 8192 \\
        Max tile/image & 12 & 12 & 12 \\
        Optimizer & AdamW & AdamW & AdamW \\
        Learning rate & $1 \times 10^{-4}$ & $1 \times 10^{-4}$ & $4 \times 10^{-5}$ \\
        Weight decay & 0.01 & 0.01 & 0.01 \\
        Optimizer momentum & $\beta_1, \beta_2 = 0.9, 0.999$ & $\beta_1, \beta_2 = 0.9, 0.999$ & $\beta_1, \beta_2 = 0.9, 0.999$ \\
        Learning rate schedule & Constant with warmup & Constant with warmup & Cosine decay \\
        Warmup ratio & 0.03 & 0.03 & 0.03 \\
        Training steps & 2000 & 2000 & 9000 \\
        Batch size & 1024 & 1024 & 1024 \\
        Number of mixin layers & 4 & 4 & 4 \\
        Trainable weights & \multicolumn{2}{c}{LVLM-S: MLP} & All \\
        & \multicolumn{2}{c}{LVLM-X: Mixin layers + MLP} & \\
        & \multicolumn{2}{c}{CoMemo: Mixin layers + MLP} & \\
        & \multicolumn{2}{c}{(Freeze gate in Phase 2)} & \\
        \bottomrule
    \end{tabular}
\end{table}

\section{Detailed Expereiment Results}
We provide the detailed ablation study results in~\cref{tab:abla_detail}.
\begin{table}[ht]
\small
\centering
\caption{Detailed results on albation study.}
\begin{tabular}{lcccccccccccccccccc}
\hline
\multirow{2}{*}{\textbf{Model}} & \multicolumn{3}{c}{\textbf{Caption}} & \multicolumn{2}{c}{\textbf{Long-Generation}} & \multicolumn{3}{c}{\textbf{Multi-Image}} & \multicolumn{2}{c}{\textbf{Long-Context}} & \multicolumn{2}{c}{\textbf{Math}} & \multicolumn{3}{c}{\textbf{General VQA}} & \multicolumn{3}{c}{\textbf{OCR-Related}} \\ 
\cmidrule(lr){2-4}   \cmidrule(lr){5-6}  \cmidrule(lr){7-9}  \cmidrule(lr){10-11}  \cmidrule(lr){12-13}  \cmidrule(lr){14-16}  \cmidrule(lr){17-19}
                              & \rotatebox{90}{COCO} & \rotatebox{90}{Flickr} & \rotatebox{90}{No-Caps} & \rotatebox{90}{LLaVABen.} & \rotatebox{90}{MMDU} & \rotatebox{90}{BLINK} & \rotatebox{90}{Mantis} & \rotatebox{90}{MMT} & \rotatebox{90}{MM-NIAH} & \rotatebox{90}{MileBench} & \rotatebox{90}{MathVista} & \rotatebox{90}{MathVision} & \rotatebox{90}{MMBench} & \rotatebox{90}{MME} & \rotatebox{90}{MMVP} & \rotatebox{90}{AI2D} & \rotatebox{90}{ChartQA} & \rotatebox{90}{TextVQA} \\ \hline

Variant 1 & 79.1 & 65.4 & 60.0 & 62.9 & 28.7 & 38.2 & 48.3 & 50.2 & 27.0 & 52.1 & 48 & 16.5 & 73.1 & 1869 & 31.3 & 74.3 & 75.6 & 74.2 \\
Variant 2 & 74.1 & 63.5 & 64.0 & 64.0 & 38.8 & 42.7 & 48.8 & 51.1 & 30.2 & 53.8 & 48.1 & 15.2 & 72.8 & 1899 & 40 & 73.4 & 72.5 & 72.9 \\
Variant 3 & 98.0 & 74.4 & 75.7 & 65.2 & 34.1 & 37 & 51.6 & 49.4 & 31.9 & 54.2 & 50.1 & 14.3 & 73.1 & 1834 & 35.3 & 75.9 & 76.2 & 74.6 \\
Variant 4 & 90.2 & 67.2	& 81.9 & 64.2 & 39.6 & 42.7 & 52.1 & 50.6 & 30.6 & 55.6 & 49.9 & 19.4 & 74.0 & 1826 & 36.7 & 74.2 & 76.1 & 74.2\\
Variant 5 & 98.6 & 78.5 & 78.8 & 66.9 & 38.7 & 43.5 & 50.6 & 51.3 & 34.2 & 54.6 & 50.0 & 17.0 & 74.2 & 1904 & 36 & 74.2 & 73.6 & 72.6 \\ \midrule
Variant 6 & 90.1 & 73.0 & 69.9 & 73.7 & 48.2 & 49.2 & 65.0 & 57.9 & 29.8 & 60.8 & 58.3 & 19.1 & 79.1 & 2210 & 45.3 & 83.5 & 84.6 & 78.7 \\
Variant 7 & 102.0 & 82.2 & 92.2 & 76.2 & 46.5 & 49.7 & 65.4 & 58.0 & 38.2 & 63.4 & 59.7 & 18.1 & 79.8 & 2182 & 45.3 & 83.5 & 77.2 & 76.6\\ \midrule
Variant 8 & 92.0 & 61.7 & 85.6 & 42 & 30.2 & 40.5 & 44.2 & 47.8 & 15.6 & 41.9 & 39.9 & 17.7 & 63.8 & 1674 & 28 & 63.2 & 63.8 & 58.6 \\
Variant 9 & 100.5 & 71.5 & 96.6 & 44.8 & 32.0 & 40.6 & 48.8 & 49.2 & 17.5 & 45.6 & 44.2 & 13.2 & 67.6 & 1759 & 31.3 & 63.3 & 64.7 & 62.6 \\
\hline
\end{tabular}
\label{tab:abla_detail}
\end{table}

\section{Dataset Details}
\begin{table}[ht]
\small
\renewcommand{\arraystretch}{1.2}
\renewcommand{\thetable}{6}
\caption{Summary of datasets used in the pretraining stage.
}

    \setlength\tabcolsep{6.4pt}
    \centering 
    \begin{tabular}{ll}
        \multicolumn{1}{l|}{task} & dataset \\
        \hline
        \multicolumn{1}{l|}{Short Caption} & \makecell[l]{Laion (en\&zh)~\cite{schuhmann2022laion}, COYO~\cite{kakaobrain2022coyo-700m}, COCO~\cite{lin2014microsoft}} \\
        
        \rowcolor{gray!15} \multicolumn{1}{l|}{OCR} & Wukong-OCR~\cite{gu2022wukong}, LaionCOCO-OCR~\cite{schuhmann2022laion-coco} \\
        \multicolumn{1}{l|}{Detection} & GRIT~\cite{peng2023kosmos}, Objects365~\cite{shao2019objects365} \\
        
        \rowcolor{gray!15} \multicolumn{1}{l|}{Conversation} & All-Seeing (en\&zh)~\cite{wang2023all} \\ 

        \multicolumn{1}{l|}{Image-text instruction data} & (see~\cref{tab:ds_sft_image}) \\

    \end{tabular}
\label{tab:ds_pretrain}
\vspace{-1em}
\end{table}
\begin{table}[t]\small
\renewcommand{\arraystretch}{1.2}
\renewcommand{\thetable}{7}
\caption{Summary of datasets used in the instruction tuning stage.
}

    \centering
    \setlength\tabcolsep{6.4pt}
    \begin{tabular}{l|l}
         task & dataset \\
\hline
General QA                        & VQAv2~\cite{goyal2017making}, GQA~\cite{hudson2019gqa}, OKVQA~\cite{marino2019ok}, VSR~\cite{liu2023visual} \\
\rowcolor{gray!15}
                                  & AI2D~\cite{kembhavi2016diagram}, ScienceQA~\cite{lu2022learn}, Chemistry Data~\cite{li2024chemvlm} \\
\rowcolor{gray!15}
\multirow{-2}{*}{Science}         & TQA~\cite{kembhavi2017you} \\

                                  & PMC-VQA~\cite{zhang2023pmc}, VQA-RAD~\cite{lau2018dataset}, VQA-Med~\cite{ben2019vqa} \\
                                  & Medical-Diff-VQA~\cite{hu2023medical}, PathVQA~\cite{he2020pathvqa}, 
 \\
\multirow{-3}{*}{Medical}         & SLAKE~\cite{liu2021slake}, PMC-CaseReport~\cite{pmc2023case} \\

\rowcolor{gray!15}
                                  & ChartQA~\cite{masry2022chartqa}, LRV-Instruction~\cite{liu2023mitigating}, PlotQA~\cite{methani2020plotqa} \\
\rowcolor{gray!15}
                                  & Unichart~\cite{masry2023unichart}, MMC-Inst~\cite{liu2023mmc}, DVQA~\cite{kafle2018dvqa} \\
\rowcolor{gray!15}
                                  & TableMWP~\cite{lu2022dynamic}, FigureQA~\cite{kahou2017figureqa}, MapQA~\cite{chang2022mapqa}  \\
\rowcolor{gray!15}
\multirow{-4}{*}{Chart}           & SciTSR~\cite{chi2019complicated}, Fintabnet~\cite{zheng2021global} \\

                                  &  CLEVR~\cite{johnson2017clevr}, MetaMath~\cite{yu2023metamath}, GeoQA+~\cite{cao2022augmented} \\
                                  & Geometry3k~\cite{lu2021inter}, GeoS~\cite{seo2015solving}, Unigeo~\cite{chen2022unigeo} \\
\multirow{-3}{*}{Mathematics}     & Super-CLEVR~\cite{li2023super}, MathQA~\cite{amini2019mathqa}\\

\rowcolor{gray!15}
                                  & Art500k~\cite{mao2017deepart}, MovieNet~\cite{huang2020movienet}, KonIQ-10k~\cite{hosu2020koniq} \\
\rowcolor{gray!15}
\multirow{-2}{*}{Knowledge}       & KVQA~\cite{shah2019kvqa}, ViQuAE~\cite{lerner2022viquae} \\

                                  & InfoVQA~\cite{mathew2022infographicvqa}, TextVQA~\cite{singh2019towards}, ArT~\cite{chng2019icdar2019} \\
                                  & CASIA~\cite{liu2011casia}, Chart-to-text~\cite{kantharaj2022chart}, COCO-text~\cite{veit2016coco} \\
                                  & CTW~\cite{yuan2019ctw}, EATEN~\cite{guo2019eaten}, ICDAR2019-LSVT~\cite{sun2019icdar} \\
                                  & ICPR MTWI~\cite{he2018icpr2018}, NAF~\cite{davis2019deep}, ReCTS~\cite{zhang2019icdar} \\
                                  & TextOCR~\cite{singh2021textocr}, LLaVAR~\cite{zhang2023llavar}, HME-100k~\cite{yuan2022syntax} \\
                                  & POIE~\cite{kuang2023visual}, SROIE~\cite{huang2019icdar2019}, ST-VQA~\cite{biten2019scene} \\
\multirow{-7}{*}{OCR}             & EST-VQA~\cite{wang2020general}, IAM~\cite{marti2002iam}\\

\rowcolor{gray!15}
Document                          & DocVQA~\cite{clark2017simple}, DocReason25k~\cite{hu2024mplug}\\

                                  & RefCOCO~\cite{kazemzadeh2014referitgame}, RefCOCO+~\cite{kazemzadeh2014referitgame}, RefCOCOg~\cite{kazemzadeh2014referitgame}  \\
\multirow{-2}{*}{Grounding}       & RD-BoxCoT~\cite{chen2023shikra} \\

\rowcolor{gray!15}
                                  & ALLaVA~\cite{chen2024allava}, LAION-GPT4V~\cite{laion_gpt4v_dataset} \\
\rowcolor{gray!15}
\multirow{-2}{*}{Conversation}    & MMDU~\cite{liu2024mmdu}, TextOCR-GPT4V~\cite{textocr-gpt4v} \\

Detection                         & Objects365~\cite{shao2019objects365}, V3Det~\cite{wang2023v3det} \\

\end{tabular}



\label{tab:ds_sft_image}
\vspace{-1.5em}
\end{table}

The data used in the pre-training stage are listed in~\cref{tab:ds_pretrain}. And datasets used for instruction tuning are listed in~\cref{tab:ds_sft_image}.


\end{document}